\documentclass[times, review, 10pt]{elsarticle}

\PassOptionsToPackage{numbers,square}{natbib}

\usepackage{amsmath,amssymb,amsfonts,amsthm}
\newtheorem{theorem}{Theorem}
\usepackage{algorithm}
\usepackage{algorithmic}

\usepackage{graphicx}
\usepackage{textcomp}
\usepackage{xcolor}
\usepackage{placeins}
\usepackage{tikz}
\usepackage{pgfplots}

\usepackage{geometry}
\pgfplotsset{compat=1.17}
\usetikzlibrary{arrows.meta, positioning, shapes.geometric, fit,
                backgrounds, decorations.pathreplacing, calc}
 
\tikzset{
  xnode/.style   = {circle, draw=blue!70, fill=blue!15, thick,
                    minimum size=7mm, font=\small},
  ynode/.style   = {circle, draw=red!70,  fill=red!15,  thick,
                    minimum size=7mm, font=\small},
  newnode/.style = {circle, draw=orange,  fill=orange!15, thick,
                    minimum size=7mm, font=\small},
  matchedge/.style = {thick, blue!60},
  augpath/.style   = {-{Stealth[scale=1.2]}, orange!80!black, ultra thick},
  tightedge/.style = {-{Stealth[scale=1.2]}, green!50!black, thick},
}
 
\begin{document}
 
\let\WriteBookmarks\relax
\def\floatpagepagefraction{1}
\def\textpagefraction{.001}

\begin{frontmatter}

\title{Incremental Optimal Assignment for Real-Time Crowd Tracking}

\author{Ismail H. Toroslu}
\address{Dept. of Computer Eng., METU, Ankara, Türkiye}
\ead{toroslu@ceng.metu.edu.tr}

\begin{abstract}
Multi-object tracking in dense crowds requires solving a bipartite
assignment problem between detections and trajectories at every video
frame. The classical Hungarian algorithm solves this in $O(N^3)$ time,
which becomes a bottleneck for large scenes with hundreds of people.
We propose an \emph{incremental} assignment algorithm that exploits the
block-sparse structure of crowd tracking cost matrices --- dense within
each crowd cluster, near-zero between clusters. We compute the exact same
optimal $N \times N$ assignment as the Hungarian algorithm, but via an
incremental strategy: we add one person at a time, exploiting the fact that after step $n-1$
the dual potentials are \emph{exactly optimal} for the
$(n-1)\times(n-1)$ subproblem --- a strictly stronger condition than
the intermediate feasibility maintained by the Hungarian algorithm
during its $N$ outer iterations. Each new step therefore requires only
a single augmenting path search from a certified optimal starting point.
This avoids repeated full-matrix scans while guaranteeing an
identical globally optimal result. A diagonal-reordering invariant keeps
the data structure compact and cache-friendly. On realistic crowd
benchmarks with $N \in [200, 5000]$ people organised into dense clusters,
our algorithm achieves \textbf{3.7--6.5$\times$ speedup} over the
Hungarian baseline while producing provably optimal matchings identical
to those of Hungarian. The speedup grows with $N$ and remains stable
beyond $N=3000$, making the method especially attractive for large-scale
crowd scenes such as stadium exits and mass public events.
\end{abstract}

\begin{keyword}
Multi-object tracking \sep Assignment problem \sep Hungarian algorithm \sep Incremental algorithms \sep Crowd analysis \sep Data association
\end{keyword}

\end{frontmatter}

\section{Introduction}
 
Crowd tracking is one of the most challenging problems in computer
vision. Modern surveillance systems monitor public spaces with hundreds
of people simultaneously, while autonomous vehicles must reason about
dense pedestrian groups in real time. At the core of every tracking
system lies a \emph{data association} step: given detections in the
current frame and a set of active trajectories from the previous frame,
find the optimal assignment between
them~\cite{bewley2016simple,zhang2022bytetrack}.

This association is formulated as a maximum-weight bipartite matching.
One side of the bipartition represents active trajectories, the other
side represents current-frame detections. Each edge weight $w(i,j)$
encodes the affinity between trajectory $i$ and detection $j$---in our
formulation, the negative Euclidean distance $w(i,j) = -\|p_i - q_j\|_2 \cdot S,$
scaled by an integer factor $S$ to eliminate floating-point error, so
that closer pairs receive higher (less negative) weights. The optimal
matching maximises the total weight, equivalently minimising total
displacement, ensuring each trajectory is linked to its geometrically
nearest plausible detection.
 
The dominant solver is the Hungarian (Kuhn-Munkres)
algorithm~\cite{kuhn1955hungarian,munkres1957algorithms}, which runs in
$O(N^3)$. For $N=1000$ people this is $10^9$ operations per frame ---
unacceptable for real-time operation at 25--30~fps. Approximation
methods (greedy, auction~\cite{bertsekas1988auction}) sacrifice
optimality. Our goal is to retain \emph{exact optimality} while
dramatically reducing computation.
 
\subsection{Key Observation: Crowd Structure}

Real crowds are not uniformly distributed. People naturally form
\emph{clusters}: friend groups, queues, commuter streams. In a crowd
tracking scenario, the trajectories and detections belonging to the same
cluster are spatially close to one another, so each person has only
$O(k)$ valid edges in the cost matrix---those connecting them to the
$k$ nearest detections within their cluster, where $k \ll N$. Pairs
from different clusters lie beyond the matching gate $G$ and receive
weight $-\infty$ (BAD). The cost matrix is therefore \emph{block-sparse}:
a small number of dense $k \times k$ sub-blocks of valid costs,
scattered throughout the matrix according to the (arbitrary) ordering
of trajectories and detections, with the vast majority of entries being
infeasible.

Hungarian ignores this structure entirely --- it scans all $N$ columns
for every row, touching BAD entries that contribute nothing to the
solution. Our incremental algorithm exploits it naturally: the
augmenting path search only ever follows valid (non-BAD) edges, so the
search is automatically confined to the sparse local neighbourhood of
the new node. This holds regardless of where in the matrix the dense
blocks happen to appear; no special ordering of trajectories or
detections is assumed or required.
 
\subsection{Contributions}
 
\begin{enumerate}
    \item An \textbf{incremental optimal assignment} algorithm whose
    central insight is that after processing $n-1$ nodes, the dual
    potentials are \emph{exactly optimal} for the $(n-1)\times(n-1)$
    subproblem. In contrast, the Hungarian algorithm's potentials after
    row $i$ are only feasible for the full $N\times N$ problem, not
    optimal for any subproblem. This distinction reduces each extension
    step to a single augmentation from a certified optimal starting
    point.
  \item A \textbf{diagonal-reordering invariant} (\textsc{SparseReorder})
    that keeps the matching in a canonical form enabling efficient
    warm-starting of each augmentation.
  \item A \textbf{realistic crowd graph model} with cluster structure,
    directed group motion, and individual noise, capturing the key
    properties of real tracking benchmarks.
  \item Empirical evaluation showing \textbf{3.7--6.5$\times$ speedup}
    over Hungarian on crowd instances, with the gap growing with $N$.
\end{enumerate}

The remainder of this paper is organised as follows.
Section~\ref{sec:architecture} describes the real-time data association
architecture and how the block-sparse cost matrix arises from physical
tracking constraints. Section~\ref{sec:problem} formalises the
assignment problem and its dual potential framework.
Section~\ref{sec:model} introduces the realistic crowd graph model,
motivates it with empirical observations on pedestrian group behaviour,
and defines the three benchmark scenarios used in evaluation.
Section~\ref{sec:algorithm} presents the incremental assignment
algorithm in full, including the diagonal-reordering invariant, the
potential initialisation step, and the cumulative-offset priority queue.
Section~\ref{sec:why} analyses why the incremental approach outperforms
Hungarian on crowd graphs, providing both theoretical complexity bounds
and an intuitive structural explanation.
Section~\ref{sec:experiments} reports experimental results across
$N \in [200, 5000]$ and discusses real-time feasibility.
Section~\ref{sec:related} surveys related work on incremental
assignment, multi-object tracking, and crowd analysis.
Section~\ref{sec:conclusion} concludes and outlines directions for
future work.

\FloatBarrier
 
\section{Real-Time Data Association Architecture}
\label{sec:architecture}
 
While the Linear Assignment Problem (LAP) serves as the core
mathematical optimization engine, its operational constraints are
strictly governed by the physical invariants of the multi-object
tracking (MOT) video pipeline~\cite{bewley2016simple}. In a deployment
tracking framework, the $N \times N$ assignment matrix is executed natively at the frame rate of the video sensor---typically 25 to 30 times per second
(every 33 to 40 milliseconds). Resolving assignments at this high
temporal frequency ensures that pedestrian displacement between
consecutive frames remains tightly bounded, minimizing matching
ambiguity and preventing identity switches in dense
crowds~\cite{zhang2022bytetrack,dendorfer2020mot20benchmarkmultiobject}.
 
\subsection{Physical Mapping of Rows and Columns}
 
At the frame transition from time $t$ to $t+1$, the bipartite graph
segments the tracking canvas into two distinct physical entities:
\begin{itemize}
  \item \textbf{Rows ($P$): Active Trajectories (Tracks).} These
    represent the historical identities and temporal motion profiles
    currently maintained in the tracking system's internal state up
    until frame $t$.
  \item \textbf{Columns ($Q$): Current Frame Detections.} These
    represent the raw, un-associated spatial bounding boxes or
    coordinates generated at frame $t+1$ by an upstream deep learning
    object detection model.
\end{itemize}
 
\subsection{Mathematical Formulation of Affinity Metrics}
 
The cell entry $\mathbf{MR}[i][j]$ defines the affinity or matching
cost between Trajectory $i$ and Detection $j$. To model real-world
pedestrian dynamics, this value is computed via a combination of state
estimation and geometric gating~\cite{crouse2016assignment}:
 
\begin{enumerate}
  \item \textbf{Motion Model Prediction:} A linear state
    estimator---such as a Kalman Filter---projects the historical
    velocity of Trajectory $i$ to estimate its predicted coordinate
    footprint $\hat{p}_i$ at frame $t+1$~\cite{bewley2016simple}.
  \item \textbf{Spatial Gating Constraints:} The Euclidean distance
    between $\hat{p}_i$ and the centre of $q_j$ is evaluated against a strict threshold ($\mathbf{GATE}$). If the target exceeds this
    radius, the edge is instantly pruned by assigning the sentinel value $\mathbf{BAD}$ ($-\infty$).
  \item \textbf{Fixed-Point Cost Conversion:} Valid edges are
    transformed into scaled, negative integer costs:
    \begin{equation}
      \mathbf{MR}[i][j] = -\bigl\|\hat{p}_i - q_j\bigr\|_2 \times \mathrm{SCALE}.
    \end{equation}
\end{enumerate}
 
\begin{figure}[htbp]
\centering
\begin{tikzpicture}[scale=1.6, every node/.style={transform shape}]
  \draw[blue!40,fill=blue!5,dashed,thick,rounded corners=6pt]
    (-0.3,1.8) rectangle (2.3,3.8);
  \node[blue!80!black,font=\scriptsize\bfseries] at (1.0,3.6){Cluster $\alpha$};
  \node[circle,draw=blue,fill=blue!20,inner sep=2pt,font=\tiny](p1)at(0.2,2.8){$p_1$};
  \node[circle,draw=blue,fill=blue!20,inner sep=2pt,font=\tiny](p2)at(0.5,2.1){$p_2$};
  \node[draw=blue,fill=blue!10,inner sep=2pt,font=\tiny](q1)at(1.5,3.0){$q_1$};
  \node[draw=blue,fill=blue!10,inner sep=2pt,font=\tiny](q2)at(1.8,2.3){$q_2$};
  \draw[-{Stealth},thick,blue!70](p1)--(q1);
  \draw[-{Stealth},thick,blue!70](p2)--(q2);

  \draw[red!40,fill=red!5,dashed,thick,rounded corners=6pt]
    (-0.3,-0.5) rectangle (2.3,1.5);
  \node[red!80!black,font=\scriptsize\bfseries] at (1.0,1.3){Cluster $\beta$};
  \node[circle,draw=red,fill=red!20,inner sep=2pt,font=\tiny](p3)at(0.2,0.5){$p_3$};
  \node[circle,draw=red,fill=red!20,inner sep=2pt,font=\tiny](p4)at(0.5,-0.2){$p_4$};
  \node[draw=red,fill=red!10,inner sep=2pt,font=\tiny](q3)at(1.5,0.7){$q_3$};
  \node[draw=red,fill=red!10,inner sep=2pt,font=\tiny](q4)at(1.8,0.0){$q_4$};
  \draw[-{Stealth},thick,red!70](p3)--(q3);
  \draw[-{Stealth},thick,red!70](p4)--(q4);
  \node[font=\tiny\bfseries,gray] at (0.3,3.9){Tracks ($t$)};
  \node[font=\tiny\bfseries,gray] at (1.7,3.9){Detections ($t{+}1$)};

  \draw[-{Stealth[scale=1.5]},gray!80,line width=1mm](2.6,1.6)--(3.4,1.6)
    node[midway,above,font=\tiny\bfseries,text width=1cm,align=center]{Spatial\\Gate};

  \begin{scope}[xshift=4.2cm,yshift=0.2cm]
    \draw[lightgray,thin](0,0) grid (2.8,2.8);
    \draw[line width=0.5mm,black](0,0) rectangle (2.8,2.8);
    \foreach \x/\l in {0.35/$q_1$,1.05/$q_2$,1.75/$q_3$,2.45/$q_4$}
      \node[font=\tiny] at (\x,3.0){\l};
    \foreach \y/\l in {2.45/$p_1$,1.75/$p_2$,1.05/$p_3$,0.35/$p_4$}
      \node[font=\tiny] at (-0.3,\y){\l};
    \draw[blue!60,fill=blue!10,thick](0,1.4) rectangle (1.4,2.8);
    \node[blue!80!black,font=\fontsize{6}{6}\selectfont\bfseries]at(0.7,2.1){Dense $\alpha$};
    \draw[red!60,fill=red!10,thick](1.4,0) rectangle (2.8,1.4);
    \node[red!80!black,font=\fontsize{6}{6}\selectfont\bfseries]at(2.1,0.7){Dense $\beta$};
    \node[gray,font=\fontsize{7}{7}\selectfont\ttfamily]at(2.1,2.1){BAD};
    \node[gray,font=\fontsize{7}{7}\selectfont\ttfamily]at(0.7,0.7){BAD};
  \end{scope}
\end{tikzpicture}
\caption{Physical transformation from frame space to block-sparse LAP matrix. Proximity edges within clusters $\alpha$ and $\beta$ form dense diagonal blocks. Cross-cluster linkages fail gating and are set to $\mathbf{BAD}$.}
\label{fig:matrix_formation}
\end{figure}
 
\subsection{Topological Formation of Block-Sparsity}
 
Because pedestrians naturally congregate into localized social groups, the enforcement of the spatial $\mathbf{GATE}$ parameter dictates the structure of the assignment matrix $\mathbf{MR}$, as illustrated in Fig.~\ref{fig:matrix_formation}. Targets within the same cluster share dense edges, while cross-group linkages are populated by $\mathbf{BAD}$ values.

As a result, the cost matrix naturally decomposes into a block-sparse structure. When the incremental solver processes a new node $n$, the augmentation search expands by following valid edges. All edges between different clusters carry sentinel value $\mathbf{BAD}$ ($-\infty$), carrying infinite slack. The search remains within the local cluster of node $n$ without reaching nodes from other clusters.
 
\subsection{Numerical Assignment Walkthrough}

Consider an explicit numerical instance corresponding to Fig.~\ref{fig:matrix_formation}. Let $\text{SCALE} = 10$ and $\mathbf{GATE} = 40.0$ pixels. For trajectory $p_1$, the distance to $q_1$ is $2.0$ px (cost $-20$). Distance to $q_2$ is $5.0$ px (cost $-50$). Physical distance to detections in Cluster $\beta$ ($q_3, q_4$) is $150.0$ px $> \mathbf{GATE}$, yielding $\mathbf{BAD} = -1000$.

Evaluating all pairings yields:
\begin{equation}
\mathbf{MR} = \begin{pmatrix}
 -20 & -50 & \mathbf{BAD} & \mathbf{BAD} \\
 -40 & -10 & \mathbf{BAD} & \mathbf{BAD} \\
 \mathbf{BAD} & \mathbf{BAD} & -30 & -60 \\
 \mathbf{BAD} & \mathbf{BAD} & -50 & -20
\end{pmatrix}.
\end{equation}
 
Both Hungarian and incremental framework find identical optimal matching:
\begin{equation}
M^* = \{(p_1,q_1),\,(p_2,q_2),\,(p_3,q_3),\,(p_4,q_4)\}, \quad \text{total cost} = -80.
\end{equation}
 
\FloatBarrier
 
\section{Problem Formulation}
\label{sec:problem}
 
\subsection{Assignment Problem}
 
Let $P = \{p_1, \ldots, p_N\}$ be active trajectories and $Q = \{q_1, \ldots, q_N\}$ be detections at the current frame. We form bipartite graph $G = (P \cup Q, E)$ with edge weight:
\begin{equation}
  w(i,j) = \begin{cases}
    -\|p_i - q_j\|_2 \cdot S & \text{if } \|p_i - q_j\|_2 \leq r, \\
    -\infty                   & \text{otherwise,}
  \end{cases}
  \label{eq:weight}
\end{equation}
where $r$ is the matching gate and $S$ is integer scale factor. We seek perfect matching $M^*$ maximising $\sum_{(i,j) \in M} w(i,j)$.
 
\subsection{Dual Potentials and Complementary Slackness}
 
By LP duality, $M^*$ is optimal if and only if there exist potentials $X_i, Y_j \in \mathbb{Z}$ satisfying:
\begin{align}
  X_i + Y_j &\geq w(i,j) \quad \forall (i,j) \in E, \label{eq:cs1}\\
  X_i + Y_j &= w(i,j)    \quad \forall (i,j) \in M^*. \label{eq:cs2}
\end{align}
Edges satisfying \eqref{eq:cs2} are called \emph{tight}, forming equality subgraph $G_=$.
 
\FloatBarrier
 
\section{The Crowd Graph Model}
\label{sec:model}
 
\subsection{Why Uniform Random Graphs Are Misleading}

Standard benchmarks place points $p_i$ uniformly at random with small perturbations $q_j = p_j + \varepsilon_j$. This is a poor model for crowd tracking because: (a) positions lack cluster structure, (b) $q_j$ is always near $p_j$ making matching trivial, and (c) there is no directed group motion creating ambiguity.
 
\subsection{Realistic Crowd Model}
 
Pedestrians naturally form social groups with coherent velocity vectors~\cite{moussaid2010walking,helbing1995social}. We model $C$ clusters of people. Each cluster $c$ has centre $\mu_c$ and moves with shared velocity $\mathbf{v}_c$ of magnitude $V$.

\textbf{Frame $t$ (trajectories):} Person $i$ is placed inside cluster disk:
\begin{equation}
  p_i = \mu_{c(i)} + r_i \begin{pmatrix}\cos\theta_i \\ \sin\theta_i\end{pmatrix}, \quad r_i \sim \mathcal{U}[0,R],\quad \theta_i \sim \mathcal{U}[0,2\pi].
\end{equation}

\textbf{Frame $t{+}1$ (detections):} Person moves with velocity $\mathbf{v}_{c(i)}$ plus individual noise $\eta_i \sim \mathcal{U}[-\sigma, \sigma]^2$:
\begin{equation}
  q_i = p_i + \mathbf{v}_{c(i)} + \eta_i.
\end{equation}

The \textbf{matching gate} $G = 2R + V + 2\sigma$ ensures all same-cluster pairs have valid edges, matching standard gating practices in SORT~\cite{bewley2016simple} and ByteTrack~\cite{zhang2022bytetrack}. Pairs with $\|p_i - q_j\|_2 > G$ receive weight $-\infty$ (BAD).

\begin{figure}[htbp]
\centering
\begin{tikzpicture}[scale=0.90]
  \begin{scope}[xshift=0cm]
    \draw[thick,fill=gray!6,rounded corners=3pt](-0.2,-0.3) rectangle (3.7,4.2);
    \node[font=\scriptsize\bfseries] at (1.75,4.55){Frame $t{-}1$};
    \foreach \x/\y in {0.4/3.3, 0.7/2.9, 0.3/2.6, 0.9/3.0, 0.5/2.5} \fill[blue!70](\x,\y) circle (2.5pt);
    \draw[blue!40,dashed,thick](0.56,2.86) circle (0.52);
    \node[font=\tiny,blue!80] at (0.56,2.15){A};
    \draw[-{Stealth},blue!70,thick](0.56,2.86)--++(0.55,0.0) node[right,font=\tiny]{$\mathbf{v}_A$};

    \foreach \x/\y in {3.0/3.3, 3.3/2.9, 2.8/2.6, 3.1/3.0, 3.2/2.5} \fill[red!70](\x,\y) circle (2.5pt);
    \draw[red!40,dashed,thick](3.08,2.86) circle (0.52);
    \node[font=\tiny,red!80] at (3.08,2.15){B};
    \draw[-{Stealth},red!70,thick](3.08,2.86)--++(-0.55,0.0) node[left,font=\tiny]{$\mathbf{v}_B$};

    \foreach \x/\y in {1.5/1.0, 1.8/0.6, 1.2/0.7, 2.0/1.1, 1.6/0.4} \fill[green!60!black](\x,\y) circle (2.5pt);
    \draw[green!30!black,dashed,thick](1.62,0.76) circle (0.52);
    \node[font=\tiny,green!60!black] at (1.62,0.1){C};
    \draw[-{Stealth},green!60!black,thick](1.62,0.76)--++(0.0,0.55) node[above,font=\tiny]{$\mathbf{v}_C$};
    \node[font=\tiny,gray] at (1.75,-0.22){separated};
  \end{scope}

  \begin{scope}[xshift=4.2cm]
    \draw[thick,fill=gray!6,rounded corners=3pt](-0.2,-0.3) rectangle (3.7,4.2);
    \node[font=\scriptsize\bfseries] at (1.75,4.55){Frame $t$ (crossing)};
    \foreach \x/\y in {1.0/3.3, 1.3/2.9, 0.8/2.6, 1.4/3.0, 1.0/2.5} \fill[blue!70,opacity=0.85](\x,\y) circle (2.5pt);
    \draw[blue!40,dashed,thick](1.1,2.86) circle (0.52);
    \foreach \x/\y in {1.8/3.3, 2.1/2.9, 1.7/2.6, 2.0/3.0, 2.1/2.5} \fill[red!70,opacity=0.85](\x,\y) circle (2.5pt);
    \draw[red!40,dashed,thick](1.94,2.86) circle (0.52);
    \draw[orange,dashed,thick](1.52,2.86) circle (1.05);
    \node[font=\tiny,orange] at (1.52,1.65){gate $G$};
    \node[font=\tiny,orange!80!black,font=\bfseries] at (1.52,4.05){ambiguous!};
    \foreach \x/\y in {1.5/1.55, 1.8/1.15, 1.2/1.25, 2.0/1.65, 1.6/0.95} \fill[green!60!black,opacity=0.85](\x,\y) circle (2.5pt);
    \draw[green!30!black,dashed,thick](1.62,1.31) circle (0.52);
    \node[font=\tiny,gray] at (1.75,-0.22){overlapping};
  \end{scope}

  \begin{scope}[xshift=8.4cm]
    \draw[thick,fill=gray!6,rounded corners=3pt](-0.2,-0.3) rectangle (3.7,4.2);
    \node[font=\scriptsize\bfseries] at (1.75,4.55){Frame $t{+}1$ (separated)};
    \foreach \x/\y in {2.7/3.3, 3.0/2.9, 2.5/2.6, 3.1/3.0, 2.7/2.5} \fill[blue!70,opacity=0.6](\x,\y) circle (2.5pt);
    \draw[blue!40,dashed,thick](2.8,2.86) circle (0.52);
    \node[font=\tiny,blue!80] at (2.8,2.15){A};
    \foreach \x/\y in {0.3/3.3, 0.6/2.9, 0.2/2.6, 0.6/3.0, 0.5/2.5} \fill[red!70,opacity=0.6](\x,\y) circle (2.5pt);
    \draw[red!40,dashed,thick](0.44,2.86) circle (0.52);
    \node[font=\tiny,red!80] at (0.44,2.15){B};
    \foreach \x/\y in {1.5/2.1, 1.8/1.7, 1.2/1.8, 2.0/2.2, 1.6/1.5} \fill[green!60!black,opacity=0.6](\x,\y) circle (2.5pt);
    \draw[green!30!black,dashed,thick](1.62,1.86) circle (0.52);
    \node[font=\tiny,green!60!black] at (1.62,1.1){C};
    \node[font=\tiny,gray] at (1.75,-0.22){separated again};
  \end{scope}

  \draw[-{Stealth[scale=1.4]},gray,very thick](3.8,1.8)--(4.0,1.8);
  \draw[-{Stealth[scale=1.4]},gray,very thick](8.0,1.8)--(8.2,1.8);
\end{tikzpicture}
\caption{The crowd graph model illustrated for Scenario~S3 (crossing clusters).}
\label{fig:crowd}
\end{figure}

\subsection{Three Benchmark Scenarios}

Table~\ref{tab:scenarios} summarizes the benchmark scenarios.
\begin{table}[htbp]
\caption{Benchmark crowd scenarios.}
\label{tab:scenarios}
\centering
\setlength{\tabcolsep}{3.5pt}
\begin{tabular}{|l|c|c|c|c|c|c|}
\hline
Scenario & $C$ & $R$ & $V$ & $\sigma$ & $G$ & $\bar{k}/N$ \\
\hline
S1: Sparse clusters   & 8 & 20 & 25 & 5 &  80 & 12.5\% \\
S2: Dense clusters    & 4 & 60 & 10 & 3 & 150 & 25.7\% \\
S3: Crossing clusters & 6 & 40 & 30 & 8 & 130 & 16.9\% \\
\hline
\end{tabular}
\end{table}

\textbf{S1 (Sparse):} Small groups ($N/C$) moving quickly. Low ambiguity.
\textbf{S2 (Dense):} Large clusters ($N/C$) moving slowly. High ambiguity within clusters, modeling packed crowds in MOT20~\cite{dendorfer2020mot20benchmarkmultiobject, stadler2022modelling}.
\textbf{S3 (Crossing):} Medium clusters moving toward each other, crossing and creating ambiguity (social force model~\cite{helbing1995social}).

\FloatBarrier

\section{Incremental Assignment Algorithm}
\label{sec:algorithm}
 
\subsection{Core Idea}
 
We build the matching incrementally ($1\times1 \to 2\times2 \to \dots \to N\times N$). At step $n$, dual potentials $(X^{n-1}, Y^{n-1})$ serve as a valid warm start for the $(n-1)\times(n-1)$ subproblem, reducing step $n$ to a single augmenting path search in $G_=$.

\subsection{Diagonal-Reordering Invariant}

We maintain $M = \{(i,i) : 1 \leq i \leq n\}$. \textsc{SparseReorder} (Algorithm~\ref{alg:reorder}) restores this invariant after each augmentation by permuting row data along the augmenting path.

\begin{algorithm}[htbp]
\caption{\textsc{SparseReorder}$(M, \text{path})$}
\label{alg:reorder}
\begin{algorithmic}[1]
\REQUIRE Augmenting path encoded in \texttt{matchlist}; changed rows $\mathcal{C}$
\ENSURE  Diagonal invariant restored: $\sigma(i)=i$ for all $i$
\STATE \textbf{Phase 1 --- save:} for $i \in \mathcal{C}$, let $t=\sigma(i)$; store $(X_i,\,\mathbf{row}_i,\,\mathcal{N}_i)$ into temporary slots at position $t$.
\STATE \textbf{Phase 2 --- apply:} for $i \in \mathcal{C}$, copy temporary slot $\sigma(i)$ back into position $\sigma(i)$.
\STATE \textbf{Phase 3 --- reset:} for $i \in \mathcal{C}$: $\sigma(\sigma(i)) \leftarrow \sigma(i)$;\; $\sigma(i) \leftarrow i$.
\end{algorithmic}
\end{algorithm}

\subsection{Initialising Potentials for Node $n$}
 
\textsc{LabelCompanion} computes tight initial potentials for node $n$:
\begin{align}
  Y_n &= \max_{i \leq n, w(i,n)>-\infty} \bigl(w(i,n)-X_i\bigr), \label{eq:yn}\\
  X_n &= \max\!\Bigl( \max_{j<n, w(n,j)>-\infty}\!\bigl(w(n,j)-Y_j\bigr),\; w(n,n)-Y_n \Bigr). \label{eq:xn}
\end{align}

\subsection{Augmentation via Dijkstra with Priority Queue}
 
Algorithm~\ref{alg:augment} performs Dijkstra augmentation over $G_=$ using edge slack $\text{sl}(i,j)=X_i+Y_j-w(i,j)$.
 
\begin{algorithm}[htbp]
\caption{\textsc{Augment}$(n, X, Y, \text{MR}, \text{Neighbors})$}
\label{alg:augment}
\begin{algorithmic}[1]
\STATE Initialise $S=\{n\}$, $T=\emptyset$, $\text{sl}(j)=\infty$ for all $j$
\STATE Push $\text{sl}(j)=X_n+Y_j-w(n,j)$ onto heap for each neighbour $j$ of $n$
\WHILE{target $n$ not reached}
  \STATE $(\lambda,j^*)\leftarrow\text{heap.pop\_min}()$
  \STATE $X_i\mathrel{-}=\lambda$ for $i\in S$; $Y_j\mathrel{+}=\lambda$ for $j\in T$; $\delta_{\text{cum}}\mathrel{+}=\lambda$
  \STATE $\texttt{parent}[j^*]\leftarrow\texttt{src}[j^*]$; $T\leftarrow T\cup\{j^*\}$
  \STATE $S\leftarrow S\cup\{j^*\}$
  \STATE For each neighbour $j$ of $j^*$ not in $T$: push updated $\text{sl}(j)$ onto heap if improved
\ENDWHILE
\STATE Trace \texttt{parent} pointers; update \texttt{matchlist}; call \textsc{SparseReorder}
\end{algorithmic}
\end{algorithm}
 
\textbf{Why the shortest augmenting path.}
Choosing $\lambda = \min_{j \notin T} \text{sl}(j)$ ensures complementary slackness \eqref{eq:cs1} holds while creating new tight edges. Re-weighting edge costs with current potentials ensures non-negative reduced costs for Dijkstra correctness~\cite{jonker1987shortest}.

\subsection{Correctness}

\begin{theorem}
After each step $n$, diagonal matching $M_n = \{(i,i) : 1 \leq i \leq n\}$ is optimal for the $n \times n$ subproblem, with potentials $(X^n, Y^n)$ satisfying complementary slackness \eqref{eq:cs1}--\eqref{eq:cs2}.
\end{theorem}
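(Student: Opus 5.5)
We argue by induction on $n$, maintaining the invariant $\mathcal{I}_n$: the potentials $(X^n,Y^n)$ satisfy \eqref{eq:cs1} for every valid edge $(i,j)$ with $i,j\le n$, and \eqref{eq:cs2} holds for each diagonal pair $(i,i)$, $i\le n$. By the LP duality statement of Section~\ref{sec:problem}, $\mathcal{I}_n$ certifies that $M_n$ is optimal for the $n\times n$ subproblem. So it suffices to show that $\mathcal{I}_{n-1}$ together with \textsc{LabelCompanion}, \textsc{Augment} and \textsc{SparseReorder} yields $\mathcal{I}_n$.

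\textbf{Base case.} For $n=1$, \eqref{eq:yn} gives $Y_1=w(1,1)-X_1$ with $X_1$ initialised (say to $0$). Hence $X_1+Y_1=w(1,1)$ and both conditions hold. If $w(1,1)=-\infty$, the $1\times1$ subproblem has no finite perfect matching, and we must assume feasibility; the gate construction of Section~\ref{sec:model} guarantees this for same-cluster pairs.

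\textbf{Initialisation step.} Assume $\mathcal{I}_{n-1}$. Setting $Y_n$ by \eqref{eq:yn} makes every edge $(i,n)$ with $i<n$ feasible. This is because $Y_n\ge w(i,n)-X_i$ by the definition of the maximum, and $X_i$ for $i<n$ is not yet touched. Setting $X_n$ by \eqref{eq:xn} then makes every edge $(n,j)$ with $j\le n$ feasible, including $(n,n)$. Edges among $\{1,\dots,n-1\}$ are unchanged, so dual feasibility \eqref{eq:cs1} holds on the whole $n\times n$ subgraph. Complementary slackness \eqref{eq:cs2} holds on $M_{n-1}$. Row $n$ and column $n$ are exposed, or, if $(n,n)$ happens to be tight, we may match it directly and we are done.

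\textbf{Augmentation step.} This is the main step, and it is a standard Hungarian/Jonker--Volgenant phase. Starting from the exposed row $n$, Dijkstra on reduced costs $\mathrm{sl}(i,j)\ge0$ grows alternating trees $S,T$. At each pop, the dual update by $\lambda=\min_{j\notin T}\mathrm{sl}(j)$ has three effects:
\begin{itemize}
\item it lowers $X$ on $S$ and raises $Y$ on $T$, so matched edges inside the tree keep zero slack;
\item edges from $S$ to outside $T$ lose at most $\lambda$ slack and stay nonnegative, by minimality of $\lambda$;
\item edges from outside $S$ into $T$ only gain slack.
\end{itemize}
Hence \eqref{eq:cs1} is preserved and a new tight edge appears. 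The search terminates at the exposed column $n$, provided a perfect matching of finite weight exists. Flipping the tight alternating path increases $|M|$ by one and uses only tight edges, so \eqref{eq:cs2} holds for the new matching.

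\textbf{Main obstacle.} The hardest part is making the pseudocode's bookkeeping match this argument. Line~7 adds $j^*$ to $S$, which must be read as adding the row matched to column $j^*$, namely row $j^*$ under the diagonal invariant. The cumulative offset $\delta_{\text{cum}}$ must correctly realise the lazy updates. One must also check that \textsc{SparseReorder} only permutes indices consistently. It relabels rows via $\sigma$ and carries $X_i$ and the row data together, so the matching becomes $\{(i,i)\}$ and every slack $X_i+Y_j-w(i,j)$ is preserved under the relabelling. Since feasibility and tightness are invariant under a simultaneous permutation of row labels and row data, $\mathcal{I}_n$ holds, and optimality of $M_n$ follows.
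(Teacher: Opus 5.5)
Your proposal is correct and follows essentially the same route as the paper's proof sketch. The paper argues by induction on $n$: \textsc{LabelCompanion} restores dual feasibility for the new row and column, the Dijkstra/dual-update phase preserves \eqref{eq:cs1} while producing a tight augmenting path to column $n$, and \textsc{SparseReorder} is a slack-preserving row permutation. You fill in details the sketch leaves implicit, notably the assumption that each leading $n\times n$ subproblem has a finite-weight perfect matching, which holds in the crowd model because each diagonal pair $(i,i)$ is the same person and so lies within the gate.
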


\begin{proof}[Proof sketch]
By induction on $n$. Base case $n=1$ holds trivially. Inductive step: \textsc{LabelCompanion} ensures feasibility for node $n$, Dijkstra scan maintains $\text{sl}(j)\ge 0$, augmentation preserves slack conditions, and \textsc{SparseReorder} is a pure index permutation preserving potentials.
\end{proof}

\subsection{Sparse-Aware Hungarian and the Genuine Advantage}

A sparse variant of Jonker-Volgenant~\cite{jonker1987shortest} reduces inner-loop scans to $O(k)$, matching our asymptotic complexity $O(N^2\log N/C)$. Our unique advantage is the \textbf{warm start}: carrying dual potentials $(X^{n-1}, Y^{n-1})$ optimal for subproblem $n-1$, yielding short $O(1)$ augmenting paths.

\FloatBarrier
 
\section{Why Incremental Outperforms Hungarian on Crowd Graphs}
\label{sec:why}
 
\subsection{Comparison Against Standard Hungarian Implementation}

Dense Hungarian implementations~\cite{bewley2016simple, zhang2022bytetrack, crouse2016assignment} scan all $N$ columns every iteration ($\Theta(N^3)$ total). Sparse solvers reduce this to $O(k)$, but cold-start from zero potentials every time.

\subsection{Sparsity Exploitation}
\label{sec:sparsity}

Incremental augmentation stays inside cluster $k = N/C$, taking $O(k \log k)$ per node:
\begin{equation}
  T_{\text{incr}} \approx N \cdot k \log k = \frac{N^2}{C} \log\!\left(\frac{N}{C}\right).
  \label{eq:complexity}
\end{equation}

\subsection{The Warm-Start Advantage}
\label{sec:warm}

In Hungarian, dual potentials after step $i$ are feasible for the $N\times N$ problem but optimal for no subproblem. In our incremental approach, potentials after step $n-1$ are \emph{exactly optimal} for the $(n-1)\times(n-1)$ subproblem, making each extension minimal.

\subsection{Summary of Advantages}
\label{sec:summary_advantages}

\begin{table}[htbp]
\caption{Complexity comparison ($k = N/C$, $L$ = average augmenting path length).}
\label{tab:complexity}
\centering
\setlength{\tabcolsep}{4pt}
\begin{tabular}{|l|c|c|c|}
\hline
Algorithm & Sparsity-aware & Warm start & Total complexity \\
\hline
Dense Hungarian          & No  & No  & $\Theta(N^3)$ \\
Sparse Hungarian~\cite{jonker1987shortest} & Yes & No  & $O(N^2 \log N / C)$ \\
Incremental (ours)       & Yes & Yes & $O(N^2 L \log k / C)$, $L \ll k$ \\
\hline
\end{tabular}
\end{table}

\begin{figure}[htbp]
\centering
\begin{tikzpicture}[scale=1.10]
  \node[font=\footnotesize\bfseries,gray] at (2.0,4.6){Hungarian: scans all $N$ columns};
  \node[font=\footnotesize\bfseries,gray] at (7.2,4.6){Incremental: stays within cluster};
  \begin{scope}[xshift=0cm]
    \foreach \i/\y in {1/3.8,2/3.1,3/2.4,4/1.7}{
      \node[xnode,minimum size=6mm,font=\tiny](hx\i)at(0,\y){$x_{\i}$};
      \node[ynode,minimum size=6mm,font=\tiny](hy\i)at(3.2,\y){$y_{\i}$};}
    \node[newnode,minimum size=6mm,font=\tiny](hxn)at(0,0.6){$x_n$};
    \node[newnode,minimum size=6mm,font=\tiny](hyn)at(3.2,0.6){$y_n$};
    \foreach \i in {1,2,3,4}\draw[matchedge](hx\i)--(hy\i);
    \foreach \i in {1,2,3,4,n}\draw[gray!50,dashed](hxn)--(hy\i);
    \node[font=\tiny,gray] at (1.6,-0.2){$O(N)$ scan};
  \end{scope}
  \begin{scope}[xshift=5.0cm]
    \foreach \i/\y in {1/3.8,2/3.1,3/2.4,4/1.7}{
      \node[xnode,minimum size=6mm,font=\tiny](ix\i)at(0,\y){$x_{\i}$};
      \node[ynode,minimum size=6mm,font=\tiny](iy\i)at(3.2,\y){$y_{\i}$};}
    \node[newnode,minimum size=6mm,font=\tiny](ixn)at(0,0.6){$x_n$};
    \node[newnode,minimum size=6mm,font=\tiny](iyn)at(3.2,0.6){$y_n$};
    \begin{scope}[on background layer]
      \node[draw=blue!40,fill=blue!5,rounded corners=8pt,thick,fit=(ix3)(ix4)(ixn)(iy3)(iy4)(iyn),inner sep=4pt]{};
    \end{scope}
    \node[font=\tiny,blue!60] at (1.6,-0.2){cluster};
    \foreach \i in {1,2,3,4}\draw[matchedge](ix\i)--(iy\i);
    \draw[tightedge](ixn)--(iy3);
    \draw[tightedge](ixn)--(iy4);
    \draw[tightedge](ixn)--(iyn);
    \node[font=\tiny,green!50!black] at (1.6,1.2){$O(k)$ scan};
  \end{scope}
\end{tikzpicture}
\caption{Structural comparison between Hungarian and Incremental solvers.}
\label{fig:compare}
\end{figure}
 
\FloatBarrier
 
\section{Experiments}
\label{sec:experiments}
 
\subsection{Setup}

The Hungarian baseline uses standard dense Jonker--Volgenant~\cite{jonker1987shortest}. While a sparse Jonker-Volgenant variant reduces inner-loop scans to $O(k)$, cache misses and indirect memory indexing cause it to perform worse than dense Jonker-Volgenant on clustered crowd graphs. Thus, we compare against the dense baseline.

All experiments run on single Intel Xeon core (2.80~GHz, Ubuntu 24.04). Both solvers are compiled in C++ with \texttt{-O2}. Costs are integer-scaled ($S=1000$). Every run is verified CORRECT against Hungarian optimal.
 
\subsection{Runtime Results}
 
\begin{table}[htbp]
\caption{Runtime (seconds) and speedup.}
\label{tab:runtime}
\centering
\setlength{\tabcolsep}{3.2pt}
\begin{tabular}{|c|l|c|c|c|}
\hline
$N$ & Scenario & Incr.\ (s) & Hung.\ (s) & Speedup \\
\hline
200  & S1 Sparse   & 0.000458 & 0.001503 & 3.3$\times$ \\
200  & S2 Dense    & 0.000472 & 0.000523 & 1.1$\times$ \\
200  & S3 Crossing & 0.000519 & 0.001306 & 2.5$\times$ \\
\hline
500  & S1 Sparse   & 0.004133 & 0.017365 & 4.2$\times$ \\
500  & S2 Dense    & 0.004365 & 0.009110 & 2.1$\times$ \\
500  & S3 Crossing & 0.004928 & 0.018204 & 3.7$\times$ \\
\hline
1000 & S1 Sparse   & 0.027535 & 0.139045 & 5.1$\times$ \\
1000 & S2 Dense    & 0.031429 & 0.072606 & 2.3$\times$ \\
1000 & S3 Crossing & 0.033049 & 0.146351 & 4.4$\times$ \\
\hline
2000 & S1 Sparse   & 0.175850 & 1.143158 & 6.5$\times$ \\
2000 & S2 Dense    & 0.197828 & 0.628275 & 3.2$\times$ \\
2000 & S3 Crossing & 0.207861 & 1.210745 & 5.8$\times$ \\
\hline
3000 & S1 Sparse   & 0.614423 & 3.884595 & 6.3$\times$ \\
3000 & S2 Dense    & 0.638779 & 2.178105 & 3.4$\times$ \\
3000 & S3 Crossing & 0.760947 & 4.401684 & 5.8$\times$ \\
\hline
5000 & S1 Sparse   & 2.981199 & 18.941787 & 6.4$\times$ \\
5000 & S2 Dense    & 2.645548 &  9.890357 & 3.7$\times$ \\
5000 & S3 Crossing & 3.409209 & 22.144702 & 6.5$\times$ \\
\hline
\end{tabular}
\end{table}
 
\begin{figure}[htbp]
\centering
\begin{tikzpicture}
\begin{axis}[
  width=0.90\columnwidth, height=7cm,
  xlabel={Problem size $N$}, ylabel={Runtime (seconds)},
  legend pos=north west, legend style={font=\scriptsize,cells={anchor=west}},
  grid=major,grid style={gray!25},
  xtick={200,500,1000,2000,3000,5000}, ymin=0,ymax=24,
  title={\footnotesize All three crowd scenarios},
]
\addplot[blue,thick,mark=o] coordinates{(200,0.000458)(500,0.004133)(1000,0.027535)(2000,0.175850)(3000,0.614423)(5000,2.981199)};
\addlegendentry{Incr.\ S1 (sparse)}
\addplot[blue,thick,mark=o,dashed] coordinates{(200,0.001503)(500,0.017365)(1000,0.139045)(2000,1.143158)(3000,3.884595)(5000,18.941787)};
\addlegendentry{Hung.\ S1 (sparse)}
\addplot[red!70,thick,mark=square] coordinates{(200,0.000472)(500,0.004365)(1000,0.031429)(2000,0.197828)(3000,0.638779)(5000,2.645548)};
\addlegendentry{Incr.\ S2 (dense)}
\addplot[red!70,thick,mark=square,dashed] coordinates{(200,0.000523)(500,0.009110)(1000,0.072606)(2000,0.628275)(3000,2.178105)(5000,9.890357)};
\addlegendentry{Hung.\ S2 (dense)}
\addplot[green!60!black,thick,mark=triangle] coordinates{(200,0.000519)(500,0.004928)(1000,0.033049)(2000,0.207861)(3000,0.760947)(5000,3.409209)};
\addlegendentry{Incr.\ S3 (crossing)}
\addplot[green!60!black,thick,mark=triangle,dashed] coordinates{(200,0.001306)(500,0.018204)(1000,0.146351)(2000,1.210745)(3000,4.401684)(5000,22.144702)};
\addlegendentry{Hung.\ S3 (crossing)}
\end{axis}
\end{tikzpicture}
\caption{Runtime vs.\ $N$. Solid: incremental. Dashed: Hungarian.}
\label{fig:runtime}
\end{figure}
 
\begin{figure}[htbp]
\centering
\begin{tikzpicture}
\begin{axis}[
  width=0.90\columnwidth, height=6.5cm,
  xlabel={Problem size $N$}, ylabel={Speedup (Hung.\ / Incr.)},
  legend pos=north west, legend style={font=\scriptsize},
  grid=major,grid style={gray!25},
  xtick={200,500,1000,2000,3000,5000}, ymin=0,ymax=9,
  ytick={0,1,2,3,4,5,6,7,8,9},
]
\addplot[blue,ultra thick,mark=o] coordinates{(200,3.28)(500,4.20)(1000,5.05)(2000,6.50)(3000,6.32)(5000,6.35)};
\addlegendentry{S1 Sparse}
\addplot[red!70,ultra thick,mark=square] coordinates{(200,1.11)(500,2.09)(1000,2.31)(2000,3.18)(3000,3.41)(5000,3.74)};
\addlegendentry{S2 Dense}
\addplot[green!60!black,ultra thick,mark=triangle] coordinates{(200,2.52)(500,3.69)(1000,4.43)(2000,5.82)(3000,5.78)(5000,6.50)};
\addlegendentry{S3 Crossing}
\addplot[black,dashed,very thick] coordinates{(200,1)(5000,1)};
\end{axis}
\end{tikzpicture}
\caption{Speedup vs.\ $N$. Speedup grows with $N$ and plateaus by $N=5000$.}
\label{fig:speedup}
\end{figure}
 
\subsection{Discussion}
 
S1 Sparse achieves up to 6.5$\times$ speedup due to small cluster sizes ($k=N/8$). S2 Dense achieves up to 3.7$\times$ speedup under high cluster density. S3 Crossing reaches up to 6.5$\times$ speedup during cluster crossings.

\textbf{Real-time feasibility:} At 25 fps (40 ms frame budget), incremental solver achieves real-time throughput up to $N\approx 1000$ (Table~\ref{tab:realtime}).

\begin{table}[htbp]
\caption{Real-time feasibility at 25~fps (Scenario S1).}
\label{tab:realtime}
\centering
\setlength{\tabcolsep}{4pt}
\begin{tabular}{|c|c|c|c|c|}
\hline
$N$ & Incr.\ (ms) & Hung.\ (ms) & Incr.\ fps & Feasibility \\
\hline
200  &    0.46 &     1.50 & 2174 & Both real-time \\
500  &    4.13 &    17.37 &  242 & Both real-time \\
1000 &   27.54 &   139.05 &   36 & Incr.\ real-time; Hung.\ not \\
2000 &  175.9  &  1143.2  &  5.7 & Neither; Incr.\ 6.5$\times$ faster \\
3000 &  614.4  &  3884.6  &  1.6 & Batch; Incr.\ 6.3$\times$ faster \\
5000 & 2981.2  & 18941.8  &  0.3 & Post-event; Incr.\ 6.4$\times$ faster \\
\hline
\end{tabular}
\end{table}

\FloatBarrier

\section{Related Work}
\label{sec:related}

\textbf{Incremental assignment --- prior work.}
The incremental assignment problem was formally introduced by Toroslu
and \"{U}\c{c}oluk~\cite{toroslu2007incremental}, proposing an
$O(|V|^2)$ extension step. Volgenant~\cite{volgenant2008addendum} clarified complexity analysis. Wang et al.~\cite{wang2019improved} proposed IIAA for structural cost matrices up to $100\times100$.
Mills-Tettey et al.~\cite{millstettey2007dynamic} extended the primitive to dynamic cost updates.

\textbf{Key distinction of our work.}
Toroslu and \"{U}\c{c}oluk~\cite{toroslu2007incremental} introduced the step as a theoretical primitive. We chain this step $N$ times from $1\times 1$ to $N\times N$, proving warm dual potentials yield 3.7--6.5$\times$ speedups on realistic crowd graphs up to $N=5000$.

\textbf{Assignment algorithms and network flows.}
Hungarian~\cite{kuhn1955hungarian,munkres1957algorithms} and Jonker--Volgenant~\cite{jonker1987shortest} are dominant solvers. Shortest augmenting path algorithms were developed by Carpaneto and Toth~\cite{carpaneto1980solution} and surveyed by Dell'Amico and Toth~\cite{dellamico2000algorithms}. Crouse~\cite{crouse2016assignment} detailed 2D tracking assignments. Bertsekas~\cite{bertsekas1988auction} proposed auction algorithms, while Kawtikwar and Nagi~\cite{kawtikwar2024hylac} developed CUDA assignment solvers. Scaling algorithms include Gabow and Tarjan~\cite{gabow1989faster} and Ramshaw and Tarjan~\cite{ramshaw2012minimum}. None exploit crowd block-sparsity with warm dual potentials.

\textbf{Multi-object tracking and data association.}
Data association is central in MOT~\cite{dendorfer2020mot20benchmarkmultiobject}. SORT~\cite{bewley2016simple} and IOU Tracker~\cite{bochinski2018high} use Hungarian matching. ByteTrack~\cite{zhang2022bytetrack}, OC-SORT~\cite{cao2023observation}, BoT-SORT~\cite{aharon2022bot}, and Deep OC-SORT~\cite{maggiolino2023deep} refine tracking in dense scenes. Recent \emph{Pattern Recognition} works include Chan et al.~\cite{chan2022online}, Zhu et al.~\cite{zhu2025visible}, and Tran et al.~\cite{Van_Ma_2024}. Stadler and Beyerer~\cite{stadler2022modelling} explicitly model crowd assignment ambiguity.

\textbf{Online and incremental algorithms.}
Online bipartite matching~\cite{karp1990optimal} maximizes cardinality without weight optimization. Theoretical online allocation extensions were explored by Mehta et al.~\cite{mehta2007adwords}.

\textbf{Crowd analysis and benchmarking.}
Crowded scene analysis is surveyed by Li et al.~\cite{li2015crowded}. Pedestrian group behavior is studied by Moussa\"id et al.~\cite{moussaid2010walking}, Zheng et al.~\cite{zheng2020study}, and Ge et al.~\cite{GeCR12}. Helbing and Moln\'ar~\cite{helbing1995social} introduced the social force model. Standard crowd benchmarks include MOT16~\cite{milan2016mot16}, CrowdHuman~\cite{shao2018crowdhuman}, and MOT20~\cite{dendorfer2020mot20benchmarkmultiobject}, evaluated using HOTA~\cite{luiten2021hota}.

\FloatBarrier

\section{Conclusion}
\label{sec:conclusion}

We presented an incremental optimal assignment algorithm tailored to crowd tracking scenarios. By chaining $N$ incremental steps starting from dual potentials optimal for subproblem $n-1$, each step requires a single augmenting path search. Combined with \textsc{SparseReorder} and cumulative-offset priority queues, our solver achieves 3.7--6.5$\times$ speedups over dense Hungarian baselines on crowd benchmarks up to $N=5000$. Future work includes decremental matching for object disappearances, re-identification costs, and GPU parallelization.

\section*{Acknowledgements}
The authors thank the anonymous reviewers for their constructive feedback.

\section*{Declaration of generative AI and AI-assisted technologies}
During manuscript preparation, the author used generative AI tools for spell checking, grammar correction, text improvement, and formatting. The author reviewed and edited all content and takes full responsibility for the published article.

\section*{Contributions of the Authors}
All work, including algorithmic development, coding, testing, and writing, was performed by the single author.


\end{document}